\documentclass[letterpaper]{article}

\PassOptionsToPackage{numbers, sort, compress}{natbib}


    \usepackage[preprint]{neurips_2020}


\usepackage[utf8]{inputenc} 
\usepackage[T1]{fontenc}    
\usepackage{hyperref}       
\usepackage{url}            
\usepackage{booktabs}       
\usepackage{amsfonts}       
\usepackage{nicefrac}       
\usepackage{microtype}      

\usepackage{bm}
\usepackage{tikz}
\usepackage{xfrac}
\usepackage{amsthm}
\usepackage{amsmath}
\usepackage{amssymb}
\usepackage{multicol}
\usepackage{multirow}
\usepackage{pgfplots}
\usepackage{subcaption}

\usepackage[inline]{enumitem}

\usepackage{algorithm}
\usepackage{algorithmic}

\usetikzlibrary{arrows}

\usepackage{tcolorbox}
\usepackage{todonotes}

\providecommand{\customgenericname}{}
\newcommand{\newcustomtheorem}[2]{%
    \newenvironment{#1}[1]
    {%
        \renewcommand\customgenericname{#2}%
        \renewcommand\theinnercustomgeneric{##1}%
        \innercustomgeneric
    }
    {\endinnercustomgeneric}
}

\newcustomtheorem{customthm}{Theorem}
\newcustomtheorem{customlemma}{Lemma}

\newtheorem{definition}{Definition}

\newtheorem{lemma}{Lemma}
\newtheorem{theorem}{Theorem}

\DeclareMathOperator{\defeq}{\triangleq}

\DeclareMathOperator{\E}{\mathbb{E}}
\DeclareMathOperator{\M}{\mathbb{M}}

\DeclareMathOperator{\T}{\boldsymbol{\theta}}

\title{A Natural Actor-Critic Algorithm with\\Downside Risk Constraints}

%

\author{%
    Thomas Spooner {\normalfont and} Rahul Savani\\
    Department of Computer Science\\
    University of Liverpool\\
    \texttt{\{t.spooner, rahul.savani\}@liverpool.ac.uk}
}

\begin{document}

\maketitle

\begin{abstract}

    Existing work on risk-sensitive reinforcement learning --- both for
    symmetric and downside risk measures --- has typically used direct
    Monte-Carlo estimation of policy gradients. While this approach yields
    unbiased gradient estimates, it also suffers from high variance and
    decreased sample efficiency compared to temporal-difference methods.
    In this paper, we study prediction and control with aversion to downside
    risk which we gauge by the lower partial moment of the return. We introduce
    a new Bellman equation that upper bounds the lower partial moment,
    circumventing its non-linearity. We prove that this proxy for the lower
    partial moment is a contraction, and provide intuition into the stability
    of the algorithm by variance decomposition. This allows sample-efficient,
    on-line estimation of partial moments.
    For risk-sensitive control, we instantiate Reward Constrained Policy
    Optimization, a recent actor-critic method for finding constrained
    policies, with our proxy for the lower partial moment. We extend the method
    to use natural policy gradients and demonstrate the effectiveness of our
    approach on three benchmark problems for risk-sensitive reinforcement
    learning.

\end{abstract}

\section{Introduction}
Reinforcement learning (RL) solves the problem of how to act optimally in a
potentially unknown environment. While it does this very well in many cases, it
has become increasingly clear that uncertainty about the environment --- both
epistemic and aleatoric in nature --- can have severe consequences on the
performance of our algorithms.  While many problems can be solved by maximising
the expected returns alone, it is rarely sufficient, and shies away from many
of the subtleties of the real-world. In fields such as finance and health, the
mitigation of risk is absolutely foundational, and the lack of practical
methods is one of the biggest roadblocks in wider adoption of RL. Now, recent
developments in risk-sensitive RL have started to enable practitioners to
design algorithms to tackle their problems. However, many of these approaches
rely on full trajectory rollouts, and most only consider variance-related risk
criteria which:
\begin{enumerate*}
    \item are not suited to all domains; and
    \item are often non-trivial to estimate in an on-line setting.
\end{enumerate*}
We rarely have the luxury of ready access to high-quality data, and our
definition of risk is usually
nuanced~\cite{tversky1979prospect,shafir2002rationality}.

This observation is not unique and indeed many fields have questioned the use
of symmetric risk measures to correctly capture human preferences. Markowitz
himself noted, for example, that ``semi-variance seems a more plausible measure
of risk than variance, since it is only concerned with adverse
deviations''~\cite{markowitz1991foundations}. Yet, save
for~\citet{tamar2015policy} --- who introduce semi-deviation as a possible
measure of risk --- very little work has been done to address this gap in RL
research. Furthermore, of those that do, even fewer still consider the question
of how to learn an incremental approximation, instead opting to directly
estimate policy gradients with sampling.

The \textbf{first contribution} of this work lies in the development of the
lower partial moment (LPM) --- i.e.\ the expected value of observations falling
below some threshold --- as an effective downside risk measure \emph{that can
be approximated efficiently through temporal-difference learning}. This insight
derives from the sub-additivity of the \(\max\) function and enables us to
define a recursive bound on the LPM that serves as a proxy in constrained
policy optimisation. We are able to prove that the associated Bellman operator
is a contraction, and analyse the variance on the transformed reward that
emerges from the approximation to gain insight into the stability of the
proposed algorithm. The \textbf{second key contribution} is to show that the
Reward Constrained Policy Optimisation framework (\texttt{RCPO})
of~\citet{tessler2019reward} can be extended to use natural policy gradients.
While multi-objective problems in RL are notoriously hard to
solve~\cite{mannor2013algorithmic}, natural gradients are known to address some
of the issues associated with convergence to local minima. The resulting
algorithm used alongside our LPM estimation procedure is easy to implement and
is shown to be highly effective in a number of problem settings.

\paragraph{Related work.}
Past work on risk-sensitivity and robustness in RL can be split into those that
tackle epistemic uncertainty, and those that tackle aleatoric uncertainty --
which is the focus of this paper. Aleatoric risk (the risk inherent to a
problem) has received much attention in the literature. For example,
in~\citeyear{moody2001learning}, \citeauthor{moody2001learning} devised an
incremental formulation of the Sharpe ratio for on-line learning.
\citet{shen2014risk} later designed a host of value-based methods using utility
functions (see also~\cite{lefebvre2017behavioural}), and work
by~\citet{tamar2016learning} and \citet{sherstan2018directly} even tackle the
estimation of the variance on returns; a contribution closely related to those
in this paper. More recently, a large body of work that uses policy gradient
methods for risk-sensitive RL has emerged
\cite{tamar2012policy,tessler2019reward,bisi2019risk,tamar2015policy}.
Epistemic risk (the risk associated with, e.g., known model inconsistencies)
has also been addressed, though to a lesser
extent~\cite{pinto2017robust,klima2019robust,spooner2020robust}. There also
exists a distinct but closely related field called ``safe RL'' which includes
approaches for safe exploration; see the excellent survey
by~\citet{garcia2015comprehensive}.

\section{Preliminaries}\label{sec:prelims}
\subsection{Markov decision processes}
A regular discrete-time Markov decision process (MDP) comprises: a state space
$\mathcal{S}$, (state-dependent) action space $\mathcal{A}(s) \subseteq
\mathcal{A}$, and set of rewards $\mathcal{R} \subseteq \mathbb{R}$. The
dynamics of the MDP are given by the state-transition probability distribution
$p(s' \mid s, a)$ with initial state distribution $d_0(s)$. The expected value
of rewards generated for a state-action pair is denoted by $r(s, a)$. A policy
$\pi_{\T}(a \mid s)$, parameterised by the length \(n\) vector \(\T \in
\mathbb{R}^n\), assigns a probability density over the set of possible actions
in a state $s$. We assume that \(\pi\) is continuously differentiable with
respect to \(\T\). For a given policy we define the return starting from time
\(t\) (in unified notation) by the sum of future rewards, \(G_t \defeq
\sum_{k=0}^T \gamma^{k} R_{t+k+1}\), where \(\gamma \in [0, 1]\) is the
discount rate and \(T\) is the terminal time~\cite{sutton2018reinforcement}; we
require that either \(\gamma < 1\) or \(T < \infty\) to keep values finite.
Value functions are then defined as expectations over the returns generated
from a given state \(v_\pi(s) \defeq \E_\pi[ G_t \mid S_t = s]\), or
state-action pair \(q_\pi(s, a) \defeq \E_\pi[ G_t \mid S_t = s, A_t = a]\).
Here, the expectations with subscript \(\pi\) are taken with respect to the
implied trajectory distribution; where \(\T\) is usually omitted from
\(\pi_{\T}\) for clarity. The goal of control in RL is to find a policy
\(\pi_{\T^\star}\) that maximises the expected return from all start states,
denoted by the reward-to-go objective \(J_R(\T) \defeq \E_\pi[G_0 \mid
d_0(\cdot)]\).

\subsection{Actor-critic methods and natural policy gradients}\label{sec:prelims:nac}
Actor-critic (AC) methods are an important class of algorithms for optimising
continuously differentiable policies. They leverage the policy gradient
theorem~\cite{sutton2000policy} to update \(\T\) in the steepest ascent
direction of \(J_R(\T)\), which is typically expressed by the derivative
\begin{equation}
    \frac{\partial J_R(\T)}{\partial \T} = \int_\mathcal{S} d_{\pi_{\T}}(s)
    \int_{\mathcal{A}(s)} \frac{\partial \pi_{\T}(a \mid s)}{\partial \T}
    q_{\pi_{\T}}(s, a)\,\mathrm{d}a\,\mathrm{d}s,
\end{equation}
where \(d_\pi(s)\) denotes the discounted state
distribution~\cite{silver2014deterministic}. Using the log-likelihood
trick~\cite{williams1992simple} we can derive sample-based estimators for this
expectation (as in REINFORCE~\cite{williams1992simple}) but these are known to
suffer from high variance.  AC methods instead replace \(q_\pi(s, a)\) with an
estimate of the action-value function, \(\hat{q}_\pi(s, a)\), parameterised by
weights \(\boldsymbol{w}_q \in \mathbb{R}^n\). This \emph{critic} is learnt
through policy evaluation and results in improved stability and sample
efficiency. This can always be done without introducing bias via compatible
function approximation (CFA), such that
\(\nabla_{\boldsymbol{w}_q}\hat{q}_\pi(s, a) = \nabla_{\T} \log{\pi_{\T}(a \mid
s)}\) and the weights \(\boldsymbol{w}_q\) minimise the mean-squared error
(MSE) between \(\hat{q}_\pi(s, a)\) and \(q_\pi(s, a)\). ``Vanilla'' policy
gradient methods like these, however, often get stuck in local
optima~\cite{kakade2001natural}. Natural gradients, denoted by
\(\widetilde{\nabla}_{\T} J(\T)\), avoid this by following the steepest ascent
direction with respect to the Fisher metric rather than in standard Euclidean
space. When combined with CFA, this gradient is given by the weights
\(\boldsymbol{w}_q\) of the critic, i.e.\ an update of the form \(\T \leftarrow
\T + \eta\,\boldsymbol{w}_q\); a rather beautiful
result~\cite{peters2008natural}.

\subsection{Constrained MDPs and \texttt{RCPO}}
Constrained MDPs are a generalisation of MDPs to problems in which the optimal
policy must also satisfy a set of behavioural
requirements~\cite{altman1999constrained}. These constraints are represented by
a penalty function $c(s, a)$ (akin to the reward function), constraint
functions \(C(s) = \E_\pi[\sum_{k=0}^T \gamma^{k} c(s, a) \mid S_{t+k} = s]\)
and \(C(s, a) = \E_\pi[\sum_{k=0}^T \gamma^{k} c(s, a) \mid S_{t+k} = s,
A_{t+k} = a]\) over the realised penalties (with some abuse of notation), and
threshold $\nu \in \mathbb{R}$.  As in~\citet{tessler2019reward}, we denote the
objective associated with the constraint function by $J_C(\T) = \E_\pi[C_\pi(s)
\mid d_0(\cdot)]$ such that the optimisation problem may be expressed by the
mean-risk model
\begin{equation}\label{eq:cmdp}
    \begin{aligned}
        \max_{\pi\in\Pi} \quad & J_R(\T), \\
        \textrm{subject to} \quad & J_C(\T) \leq \nu,
    \end{aligned}
\end{equation}
where \(\Pi\) is the space of policies. Constrained optimisation problems like
this are then typically recast as saddle-point problems by Lagrange
relaxation~\cite{bertsekas1997nonlinear}:
\begin{equation}
    \min_{\lambda \geq 0}\max_{\T} \mathcal{L}(\lambda, \T) =
    \min_{\lambda \geq 0}\max_{\T} \left[
        J_R(\T) - \lambda\cdot \left(J_C(\T) - \nu\right)
    \right],
\end{equation}
where $\mathcal{L}$ denotes the Lagrangian, and $\lambda \geq 0$ the Lagrange
multiplier. \emph{Feasible solutions} are those that satisfy the constraint,
the existence of which depends on the particular problem and choice of $c(s,
a)$ and $\nu$. Any policy that is not a feasible solution is considered
sub-optimal. Approaches to solving problems of this kind then revolve around
the derivation and estimation of the gradients of $\mathcal{L}$ w.r.t.\ the
policy parameters $\T$ and multiplier
$\lambda$~\cite{borkar2005actor,bhatnagar2012online}. Recent work
by~\citet{tessler2019reward} extended these techniques to handle \emph{general
constraints} without prior knowledge, while remaining invariant to reward
scaling. Their algorithm, named \texttt{RCPO}, is a multi-timescale stochastic
approximation algorithm with provable convergence guarantees under standard
assumptions. \texttt{RCPO}'s updates take the form
\begin{align}
    \nabla_{\T}\mathcal{L} &=
    \E_\pi\left[
        \nabla_{\T}\log\pi_{\T}(s, a)\left[q_\pi(s, a) - \lambda C_\pi(s,
        a)\right]
    \right], \label{eq:rcpo:theta}\\
    \nabla_\lambda\mathcal{L} &= \nu - C_\pi(s).
\end{align}

\section{Downside Risk Measures}
In general, the choice of $c(s, a)$ depends on the problem and desired
behaviour, which need not always be motivated by risk. For example, in robotics
problems, this may take the form of a cost applied to policies with a large
jerk or snap in order to encourage smooth motion. In economics and health
problems, the constraint is typically based on some measure of risk/dispersion
associated with the uncertainty in the outcome, such as the variance. However,
in many real world applications, it is more appropriate to consider
\emph{downside risk}, such as the dispersion of returns below a target
threshold, or the likelihood of Black Swan events.  Intuitively, we may think
of a \emph{general risk measure} as a measure of ``distance'' between risky
situations and those that are risk-free, when both favourable and unfavourable
discrepancies are accounted for equally. A \emph{downside risk measure}, on the
other hand, only accounts for deviations that contribute unfavourably to
risk~\cite{dhaene2004solvency,danielsson2006consistent}.

\subsection{Partial Moments}
Partial moments were first introduced as a means of measuring the
probability-weighted deviations below (or above) a target threshold \(\tau\).
These feature prominently in finance and statistical modelling as a means of
defining (asymmetrically) risk-adjusted metrics of
performance~\cite{sortino1994performance,farinelli2008sharpe,shapiro2014lectures,sunoj2019some}.
Our definition of partial moments, stated below, follows the original
formulation of~\citet{fishburn1977mean}.
\begin{definition}
    Let $\tau \in \mathbb{R}$ denote a target value, then the
    $m$\textsuperscript{th}-order partial moments of the random variable $X$
    about $\tau$ are given by
    \begin{equation}\label{eq:partial_moments}
        \M_\pm^m[X\mid\tau] \defeq \mp\E[(\tau - X)^m_\pm],
    \end{equation}
    where $(x)_+ \equiv \max{\{x, 0\}}$, \((x)_- \equiv \min{\{x, 0\}}\), and
    $m \in [1, \infty)$.
\end{definition}
The two quantities \(\M^m_-[X\mid\tau]\) and \(\M^m_+[X\mid\tau]\) are known as
the \emph{lower} and \emph{upper} partial moments (LPM/UPM), respectively. When
the target is chosen to be the mean --- i.e. \(\tau = \E[X]\) --- we refer to
them as the \emph{centralised partial moments}, and typically drop \(\tau\)
from the notation for brevity. For example, the semi-variance is given by the
centralised, second-order LPM: $\M_-^2\left[X\right]$. Unlike the expectation
operator, \eqref{eq:partial_moments} are non-linear functions of the input and
satisfy very few of the properties that make expected values well behaved. Of
particular relevance to this work is the fact that they are non-additive. This
presents a challenge in the context of approximation since we cannot directly
apply the Robbins-Monro algorithm~\cite{robbins1951stochastic}. As we will show
in Section~\ref{sec:prediction}, however, we can estimate an upper bound for
the first partial moment, for which we introduce the following key property:
\begin{lemma}[Subadditivity]\label{lem:subadditivity}
    Consider a pair of random variables \(X\) and \(Y\), and a fixed, additive
    target \(\tau = \tau_X + \tau_Y\). Then for \(m = 1\), the partial moment
    is subadditive in \(X\) and \(Y\):
    \begin{equation}\label{eq:subadditivity}
        \M_\pm[X + Y \mid \tau] \leq \M_\pm[X \mid \tau_X] + \M_\pm[Y \mid
        \tau_Y].
    \end{equation}
\end{lemma}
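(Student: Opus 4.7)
The plan is to reduce Lemma~\ref{lem:subadditivity} to a deterministic pointwise inequality and then integrate. The additive-target assumption $\tau = \tau_X + \tau_Y$ yields the key rewriting
\[
\tau - (X + Y) = (\tau_X - X) + (\tau_Y - Y),
\]
so the lemma becomes a statement about how the maps $(\cdot)_+$ and $(\cdot)_-$ interact with sums of reals, averaged against the joint law of $(X, Y)$.

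The technical core I would invoke is the pair of elementary sign inequalities $(a + b)_+ \leq (a)_+ + (b)_+$ and $(a + b)_- \geq (a)_- + (b)_-$, valid for every $a, b \in \mathbb{R}$. These encode sublinearity of the positive part and the dual superlinearity of the negative part; they follow either from a short four-case sign analysis on $(a, b, a+b)$ or, more slickly, from the identity $(x)_\pm = \tfrac{1}{2}(x \pm |x|)$ combined with the triangle inequality $|a + b| \leq |a| + |b|$. Because the lemma is restricted to $m = 1$, no convexity argument on $x \mapsto x^m$ is required, which is precisely what makes this lemma special relative to higher-order partial moments and why the construction in Section~\ref{sec:prediction} targets the first moment.

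I would then apply these inequalities pointwise with $a = \tau_X - X(\omega)$ and $b = \tau_Y - Y(\omega)$, and use monotonicity of $\E[\cdot]$ to obtain
\[
\E[(\tau - (X + Y))_+] \leq \E[(\tau_X - X)_+] + \E[(\tau_Y - Y)_+]
\]
together with the reversed inequality for the $(\cdot)_-$ term. Multiplying through by the scalar $\mp$ prescribed by the definition of $\M_\pm$ reorients the minus-part inequality into the same $\leq$ direction as the plus-part, yielding both halves of the lemma simultaneously. The only real obstacle is the $\mp\pm$ bookkeeping: one must make sure the sign flip applied to the $(\cdot)_-$ side aligns with the superadditivity direction of that pointwise inequality. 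Once this is tracked carefully, the proof collapses to a one-line pointwise argument, with equality holding whenever $\tau_X - X$ and $\tau_Y - Y$ share the same sign almost surely.
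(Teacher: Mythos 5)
Your proof is correct and takes essentially the same route as the paper's: both reduce the claim to the pointwise sublinearity of \((\cdot)_\pm\) via the identity \((x)_\pm = \tfrac{1}{2}(x \pm |x|)\) together with the triangle inequality, applied to the decomposition \(\tau - X - Y = (\tau_X - X) + (\tau_Y - Y)\), and then conclude by linearity and monotonicity of expectation. If anything, you are slightly more explicit than the paper about the negative-part (superadditive) case and the \(\mp\) sign bookkeeping, which the paper dispatches with ``the same logic.''
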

\begin{proof}
    Consider the lower partial moment, expressing the inner term as a function
    of real and absolute values $\left[|\tau - X - Y| + \tau - X - Y\right] /
    2$. By the subadditivity of the absolute function (triangle inequality), it
    follows that:
    \begin{equation}
        \left(\tau - X - Y\right)_+ = \left(\tau_X - X + \tau_Y - Y\right)_+
        \leq \left(\tau_X - X\right)_+ + \left(\tau_Y -
        Y\right)_+.
    \end{equation}
    By the linearity of the expectation operator, we arrive
    at~\eqref{eq:subadditivity}. This result may also be derived for the upper
    partial moment by the same logic.
\end{proof}

\paragraph{Motivating example.}
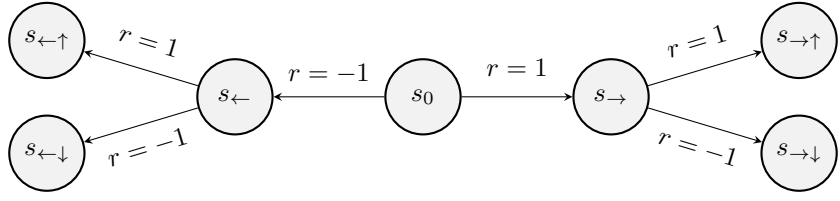
\begin{figure*}
    \centering
    \begin{tikzpicture}[
        ->, >=stealth,
        every node/.style={circle, draw, thick, fill=gray!10, minimum size=1.0cm},
        level 1/.style={sibling distance=2.8cm},
        level 2/.style={sibling distance=1.5cm},
        level distance=2.5cm
    ]
    \node (s0) {$s_0$}
        child [grow=right] {
        node (sr) {$s_\rightarrow$}
            child { node (srd) {$s_{\rightarrow\downarrow}$} edge from parent node[midway, sloped, below=-0.4cm, draw=none, fill=none, label distance=1cm] {$r=-1$} }
            child { node (sru) {$s_{\rightarrow\uparrow}$} edge from parent node[midway, sloped, above=-0.2cm, draw=none, fill=none] {$r=1$} }
            edge from parent node[midway, draw=none, fill=none, above=-0.2cm] {$r=1$}
        }
        child [grow=left] {
        node (sl) {$s_\leftarrow$}
            child { node (slu) {$s_{\leftarrow\uparrow}$} edge from parent node[midway, sloped, above=-0.2cm, draw=none, fill=none] {$r=1$} }
            child { node (sld) {$s_{\leftarrow\downarrow}$} edge from parent node[midway, sloped, below=-0.4cm, draw=none, fill=none] {$r=-1$} }
            edge from parent node[midway, draw=none, fill=none, above=-0.4cm] {$r=-1$}
        };
    \end{tikzpicture}

    \caption{Simple MDP with two actions and 7 states; the terminal state is
    omitted.}\label{fig:tamar_mdp}
\end{figure*}

\begin{figure}
    \centering
    \begin{subfigure}[t]{0.24\linewidth}
        \centering
        \includegraphics[width=\linewidth]{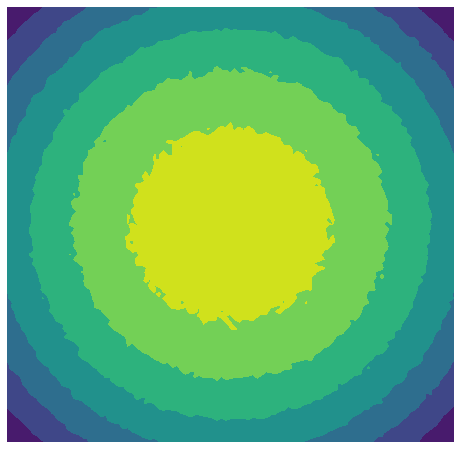}
        \caption{$\mathbb{V}[G]$}
    \end{subfigure}%
    \hfill
    \begin{subfigure}[t]{0.24\linewidth}
        \centering
        \includegraphics[width=\linewidth]{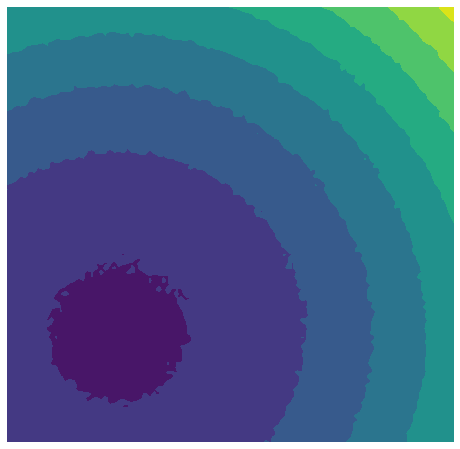}
        \caption{$\E[G] - \mathbb{V}[G]$}
    \end{subfigure}%
    \hfill
    \begin{subfigure}[t]{0.24\linewidth}
        \centering
        \includegraphics[width=\linewidth]{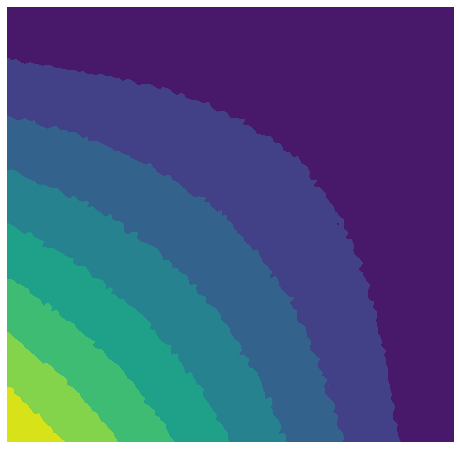}
        \caption{$\M_-[G \mid 0]$}
    \end{subfigure}
    \hfill
    \begin{subfigure}[t]{0.24\linewidth}
        \centering
        \includegraphics[width=\linewidth]{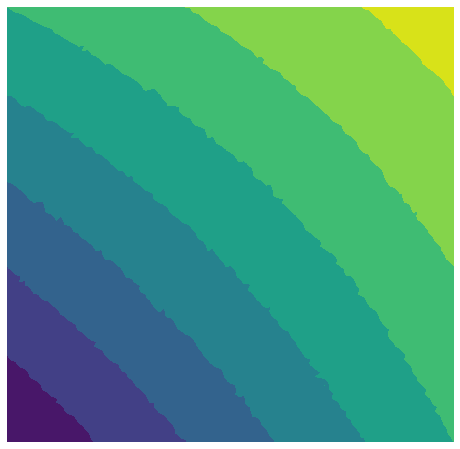}
        \caption{$\E[G] - \M_-[G \mid 0]$}
    \end{subfigure}

    \caption{Moments of the return $G$ generated by the MDP in
    Fig.~\ref{fig:tamar_mdp}. The x-axis corresponds to $\theta_1 \in [0, 1]$,
and the y-axis to $\theta_2 \in [0, 1]$. Higher values are in yellow, and lower
values in dark blue.}\label{fig:mdp_example}
\end{figure}

Why is this so important? Consider the MDP in Fig.~\ref{fig:mdp_example}, with
stochastic policy parametrised by $\T_{1,2} \in [0, 1]$ such that
$\pi(\rightarrow \mid s_0) = \theta_1$ and $\pi(\uparrow \mid s_\leftarrow) =
\pi(\uparrow \mid s_\rightarrow) = \theta_2$. As shown
by~\citet{tamar2012policy}, even in a simple problem such as this, the space of
solutions for a mean-variance criterion is non-convex. Indeed,
Fig.~\ref{fig:mdp_example} shows that the solution space exhibits local-optima
for the deterministic policies $\T_{1,2} \in \{(0, 0), (1, 0), (0,
1)\}$.\footnote{These correspond to the three minima in variance seen in Fig.~1
of~\citet{tamar2012policy}.} On the other hand, the lower partial moment only
exhibits a single optimum at the correct solution of $\T_1 = \T_2 = 1$. While
this is certainly not proof that such phenomena occur in all cases, it does
suggest that partial moments have a valid place in risk-averse RL, and may in
some instances lead to more amenable learning.

\section{Prediction}\label{sec:prediction}
Our objective in this section is now to derive an incremental,
temporal-difference prediction algorithm for the first LPM of the return
distribution \(G_t\).\footnote{The same follows for the UPM, though it's
validity in promoting risk-sensitivity is unclear.} To begin, let
$\rho[\tau](s, a)$ denote the first LPM of $G_t$ with respect to a target
function \(\tau(s, a)\), starting from state-action pair \((s, a)\), by
\begin{equation}\label{eq:rho:return}
    \rho[\tau](s, a) \defeq \M_-[G_t \mid S_t = s, A_t = a, \tau(s, a)],
\end{equation}
where the \emph{centralised moments} are shortened to \(\rho(s, a)\). For a
given target, this function can be learnt trivially through Monte-Carlo (MC)
estimation using batches of sample trajectories. Indeed, we can even learn the
higher-order moments using such an approach. However, while this yields an
unbiased estimate of the LPM, it comes at the cost of increased variance and
decreased sample efficiency~\cite{sutton2018reinforcement}. This is especially
pertinent in risk-sensitive RL which is often concerned with (already) highly
stochastic domains. The challenge is that~\eqref{eq:rho:return} is a non-linear
function of \(G_t\) which does not have a direct recursive form amenable to
TD-learning.

Rather than learn the LPM directly, we instead learn a proxy in the form of an
upper bound. To begin, we note that by Lemma~\ref{lem:subadditivity}, the LPM
of the return distribution satisfies
\begin{equation}\label{eq:rho:bound}
    \rho[\tau](s, a) \leq \M_-\left[R_{t+1} \mid \tau_R(s, a)\right] +
    \gamma\M_-\left[G_{t+1} \mid \tau(s', a')\right],
\end{equation}
for \(\tau(s, a) = \tau_R(s, a) + \gamma \E_\pi[\tau(s', a')]\). Unravelling
the final term ad infinitum yields a geometric series of bounds on the reward
moments. This sum admits a recursive form
\begin{equation}\label{eq:rho:recursive}
    \varrho[\tau](s, a) \defeq \M_-[R_{t+1} \mid \tau_R] +
    \gamma\varrho[\tau](s', a'),
\end{equation}
which is, precisely, an action-value function with non-linear reward
transformation: \(g(r) = {(\tau_R - r)}_+\).\footnote{We note that this
    expression bears a resemblance to the reward-volatility objective
    of~\citet{bisi2019risk}.} This means \emph{we are free to use any
prediction algorithm to perform the actual TD updates}, such as SARSA or
GQ(\(\lambda\))~\cite{maei2010gq}. We now only need to choose \(\tau_R\) to
satisfy our requirements; perhaps to minimise the error
between~\eqref{eq:rho:return} and~\eqref{eq:rho:recursive}. For example, a
fixed target yields the expression \(\tau_R(s, a) = (1 - \gamma) \tau\).
Alternatively, a centralised variant would be given by \(\tau_R(s, a) = r(s,
a)\). This freedom to choose a target function affords us a great deal of
flexibility in designing downside risk metrics.

\paragraph{Convergence.}
As noted by~\citet{van2019general}, Bellman equations with non-linear reward
transformations (i.e.~\eqref{eq:rho:recursive}) carry over all standard
convergence results under the assumption that the transformation is bounded.
This is trivially satisfied when the rewards themselves are
bounded~\cite{bertsekas1996neuro}. This means that the associated Bellman
operator is a contraction, and that the proxy converges with stochastic
approximation under the standard Robbins-Monro
conditions~\cite{robbins1951stochastic}.

\begin{lemma}\label{lem:var_bound}
    The variance on a random variable \(X \in \mathbb{R}\) satisfies the
    inequality \(\mathbb{V}[(c - X)_+] \leq \mathbb{V}[X]\) for arbitrary
    constant \(c \in \mathbb{R}\).
\end{lemma}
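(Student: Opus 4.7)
The plan is to reduce the inequality to the fact that the map $f(x)\defeq(c-x)_+$ is $1$-Lipschitz, and then exploit the standard ``independent copy'' representation of the variance to transfer the Lipschitz bound to second moments. This avoids having to compute $\mathbb{E}[(c-X)_+]$ and $\mathbb{E}[(c-X)_+^2]$ explicitly in terms of the distribution of $X$, which would otherwise require assumptions on its density or a case analysis over the support.

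First I would verify that $f(x)=(c-x)_+$ satisfies $|f(x)-f(y)|\leq |x-y|$ for all $x,y\in\mathbb{R}$. This is a short case analysis: if both $x,y\le c$ then $|f(x)-f(y)|=|x-y|$; if both $x,y\ge c$ then both sides are $0$; and in the mixed case (say $x\le c\le y$) we have $|f(x)-f(y)|=c-x\le y-x=|x-y|$. Equivalently, one can note that $f$ is the composition of the affine map $x\mapsto c-x$ (an isometry) with the positive-part map, both of which are $1$-Lipschitz.

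Next I would invoke the identity
\begin{equation}
    \mathbb{V}[Z] = \tfrac{1}{2}\,\mathbb{E}\!\left[(Z-Z')^2\right],
\end{equation}
where $Z'$ is an independent copy of $Z$. This follows immediately by expanding the square and using independence together with $\mathbb{E}[Z']=\mathbb{E}[Z]$ and $\mathbb{E}[Z'^2]=\mathbb{E}[Z^2]$. Letting $X'$ be an independent copy of $X$, the $1$-Lipschitz bound yields $(f(X)-f(X'))^2\le(X-X')^2$ pointwise, hence
\begin{equation}
    \mathbb{V}[(c-X)_+] = \tfrac{1}{2}\,\mathbb{E}\!\left[(f(X)-f(X'))^2\right] \le \tfrac{1}{2}\,\mathbb{E}\!\left[(X-X')^2\right] = \mathbb{V}[X],
\end{equation}
which is the required inequality.

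There is no real obstacle here; the only step that might look non-obvious is the use of the independent-copy identity, but it is a one-line consequence of linearity of expectation. The argument also shows more: the same reasoning applies to any $1$-Lipschitz transformation of $X$ (in particular to $(X-c)_+$, $|X-c|$, clipping, etc.), so the lemma is a special case of a general variance-contraction principle.
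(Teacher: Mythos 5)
Your proof is correct, and it takes a genuinely different route from the paper's. The paper conditions on the sign of $Y = c - X$, writes out the first and second moments of $Y$ and $Z = (Y)_+$ in terms of the conditional quantities $p_\pm$, $\mu_\pm$, $m_\pm^2$, and shows by direct computation that $\mathbb{V}[Y] - \mathbb{V}[Z] = \left(p_- m_-^2 - p_-^2\mu_-^2\right) + \left(-2p_+p_-\mu_+\mu_-\right)$ is a sum of two non-negative terms. You instead observe that $x \mapsto (c-x)_+$ is $1$-Lipschitz and combine this with the independent-copy identity $\mathbb{V}[Z] = \tfrac{1}{2}\E[(Z-Z')^2]$, so the pointwise bound $(f(X)-f(X'))^2 \le (X-X')^2$ immediately transfers to variances. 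Both arguments are sound; the only point worth making explicit in yours is integrability, but since both sides of the pointwise inequality are non-negative the expectation inequality holds even when $\mathbb{V}[X] = \infty$, in which case the claim is vacuous anyway. Your approach buys generality and brevity: it proves variance contraction for every $1$-Lipschitz transformation in one stroke, which would cover, e.g., clipped or absolute-deviation reward transformations with no extra work. The paper's decomposition buys interpretability: it isolates exactly where the variance reduction comes from --- the truncated negative tail contributes $p_- m_-^2 - p_-^2\mu_-^2 \ge 0$ and the removed cross-term contributes $-2p_+p_-\mu_+\mu_- \ge 0$ --- which is in keeping with the paper's use of the lemma to argue for the stability of the transformed-reward Bellman equation.
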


\paragraph{Variance analysis.}
Lemma~\ref{lem:var_bound} (proof of which is in the appendix) can
be used to show that the non-linear reward term in~\eqref{eq:rho:recursive}
exhibits a lower variance than that of the original reward: \(\mathbb{V}[g(R)]
\leq \mathbb{V}[R]\). While we provide no formal proof, this observation
suggests that the variances on the corresponding Bellman equations satisfy an
inequality in the same direction.  This motivates the use of our proposed LPM
proxy compared to prediction methods of higher order moments, which, by
definition, will suffer from increased variance, and may therefore be less
stable.

\section{Control}
In the previous section we saw how the upper bound on the LPM of the return can
be learnt effectively in an incremental fashion. Putting this to use now
requires that we integrate our estimator into a constrained policy optimisation
framework. This is particularly simple in the case of \texttt{RCPO}, for which
we incorporate~\eqref{eq:rho:recursive} into the penalised reward function
introduced in Def.~3~\cite{tessler2019reward}. Following their template, we may
derive a whole class of actor-critic algorithms that optimise for a downside
risk-adjusted objective, including those that make use of natural gradients.

\begin{theorem}[Compatible function approximation]\label{thm:cfa}
    If both \(\hat{q}(s, a)\) and \(\hat{\varrho}[\tau](s, a)\) are compatible
    \begin{equation}\label{eq:cfa:req1}
        \psi(s, a) \defeq \frac{\partial\hat{q}(s,
            a)}{\partial\boldsymbol{w}_q} =
            \frac{\partial\hat{\varrho}[\tau](s,
                a)}{\partial\boldsymbol{w}_\varrho} =
                \frac{1}{\pi_{\T}(a \mid s)}
                \frac{\partial\pi_{\T}(a \mid
                s)}{\partial\T},
    \end{equation}
    and independently minimise the errors
    \begin{equation}\label{eq:cfa:req2}
        \mathcal{E}_q^2 \defeq \E\left[{\left(\psi{(s, a)}^\top \cdot
        \boldsymbol{w}_q - q(s, a)\right)}^2\right], \quad
        \mathcal{E}_\varrho^2 \defeq \E\left[{\left(\psi{(s, a)}^\top \cdot
        \boldsymbol{w}_\varrho - \varrho[\tau](s, a)\right)}^2\right],
    \end{equation}
    then we can replace \(q(s, a) - \lambda\varrho[\tau](s, a)\) with
    \(\hat{q}(s, a) - \lambda\hat{\varrho}[\tau](s, a)\) to get
    \begin{equation}\label{eq:rcpo:pgrad}
        \widetilde{\nabla}_{\T}\mathcal{L}(\lambda, \T) = \frac{\partial
            \mathcal{L}(\lambda, \T)}{\partial\T} = \int_\mathcal{S}
            d_{\pi_{\T}}(s) \int_{\mathcal{A}(s)} \frac{\partial \pi_{\T}(a
                \mid a)}{\partial\T} \left[ \hat{q}(s, a) - \lambda
            \hat{\varrho}[\tau](s, a) \right]\,\mathrm{d}a\,\mathrm{d}s.
    \end{equation}
\end{theorem}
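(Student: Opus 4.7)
The plan is to follow the template of the classical compatible function approximation theorem of Sutton et al., but applied separately to each of the two critics and then combined by linearity. The starting point is the Lagrangian policy gradient for RCPO (equation~\eqref{eq:rcpo:theta} after substituting $C_\pi(s,a) \mapsto \varrho[\tau](s,a)$), written in integral form as
\begin{equation*}
    \frac{\partial \mathcal{L}(\lambda,\T)}{\partial\T} = \int_\mathcal{S} d_{\pi_{\T}}(s) \int_{\mathcal{A}(s)} \frac{\partial \pi_{\T}(a \mid s)}{\partial \T}\bigl[q(s,a) - \lambda\,\varrho[\tau](s,a)\bigr]\,\mathrm{d}a\,\mathrm{d}s,
\end{equation*}
which comes immediately from the policy gradient theorem applied to the penalised reward. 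I would then show that substituting $\hat{q}$ and $\hat{\varrho}[\tau]$ for $q$ and $\varrho[\tau]$ leaves this integral unchanged.

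The first key step uses the compatibility condition~\eqref{eq:cfa:req1} together with the fact that $w_q$ is a minimiser of $\mathcal{E}_q^2$ in~\eqref{eq:cfa:req2}. Setting the derivative of $\mathcal{E}_q^2$ with respect to $w_q$ to zero gives the orthogonality relation
\begin{equation*}
    \E\bigl[\psi(s,a)\bigl(\hat{q}(s,a) - q(s,a)\bigr)\bigr] = 0,
\end{equation*}
where the expectation is over $d_{\pi_{\T}}(s)\,\pi_{\T}(a\mid s)$. Multiplying through by $\pi_{\T}(a \mid s)$ inside the integral and using $\psi(s,a)\pi_{\T}(a\mid s) = \partial\pi_{\T}(a\mid s)/\partial\T$ gives
\begin{equation*}
    \int_\mathcal{S} d_{\pi_{\T}}(s) \int_{\mathcal{A}(s)} \frac{\partial\pi_{\T}(a\mid s)}{\partial\T}\bigl(\hat{q}(s,a) - q(s,a)\bigr)\,\mathrm{d}a\,\mathrm{d}s = 0.
\end{equation*}
The identical argument applied to $w_\varrho$ and $\mathcal{E}_\varrho^2$ yields the corresponding identity for $\hat{\varrho}[\tau] - \varrho[\tau]$. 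By linearity, the combined error $(\hat{q} - q) - \lambda(\hat{\varrho}[\tau] - \varrho[\tau])$ also integrates against $\partial\pi_{\T}/\partial\T$ to zero, which gives~\eqref{eq:rcpo:pgrad}.

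For the natural-gradient equality in~\eqref{eq:rcpo:pgrad}, I would invoke the standard result recalled in Section~\ref{sec:prelims:nac}: under CFA, $\hat{q}(s,a) = \psi(s,a)^\top w_q$ and $\hat{\varrho}[\tau](s,a) = \psi(s,a)^\top w_\varrho$, so the integrand factors through $\E[\psi\psi^\top](w_q - \lambda w_\varrho) = F(\T)(w_q - \lambda w_\varrho)$ where $F$ is the Fisher information matrix; pre-multiplying by $F^{-1}$ yields $\widetilde{\nabla}_{\T}\mathcal{L} = w_q - \lambda w_\varrho$, justifying the natural-gradient interpretation with a single additive actor update. The main obstacle here is essentially bookkeeping — making sure both critics share the compatibility basis $\psi(s,a)$ and that their MSE minimisations are carried out under the same on-policy distribution so that the two orthogonality conditions can be added. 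Subtleties around $\hat{\varrho}[\tau]$ being a critic for a non-linearly transformed reward do not affect the argument, because the CFA derivation only uses the regression interpretation of the critic, not the particular Bellman equation that defines its target.
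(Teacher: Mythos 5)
Your proposal is correct and follows essentially the same route as the paper's own proof: the stationarity (orthogonality) condition of the compatible MSE objective for each critic separately, converted via $\psi(s,a)\,\pi_{\T}(a\mid s) = \partial\pi_{\T}(a\mid s)/\partial\T$ into the statement that each approximation error integrates to zero against the score-weighted state-action distribution, then combined by linearity of the Lagrangian gradient. Your additional sketch of the Fisher-matrix argument for $\widetilde{\nabla}_{\T}\mathcal{L} = \boldsymbol{w}_q - \lambda\boldsymbol{w}_\varrho$ goes slightly beyond what the paper's appendix proof spells out, but is consistent with the natural-gradient result the paper invokes from Section~\ref{sec:prelims:nac}.
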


Crucially, if the two value function estimators \(\hat{q}(s, a)\) and
\(\hat{\varrho}[\tau](s, a)\) are \emph{compatible} with the policy
parameterisation~\cite{sutton2000policy}, then we may extend \texttt{RCPO} to
use natural policy gradients (see Section~\ref{sec:prelims:nac}). We call the
resulting algorithm \texttt{NRCPO} for which the existence hinges on
Theorem~\ref{thm:cfa} above; a proof is provided in the appendix.
This shows under which conditions the true value functions may be replaced with
the function approximators, without introducing bias. Assuming the conditions
are met, it remains only to replace the ``vanilla'' gradient in
Eq.~\ref{eq:rcpo:theta} with the natural gradient. This yields a policy update
\[
    \T \leftarrow \T + \eta \left(\boldsymbol{w}_q -
    \lambda\boldsymbol{w}_\varrho\right),
\]
which is not only trivial to implement, but also benefits from all the
advantages associated with using natural gradients~\cite{kakade2001natural}.

\section{Numerical Experiments}
Here we present evaluations of our proposed \texttt{NRCPO} algorithm on three
experimental domains using variations on \(\tau(s, a)\) in the LPM proxy
(Eq.~\ref{eq:rho:recursive}). The chosen hyperparameters and experimental
configurations, unless otherwise stated, can be found in the appendix.

\subsection{Multi-armed bandit}
The first problem setting --- taken from~\citet{tamar2015policy} --- is a
\emph{3-armed bandit} with rewards distributed according to: $R_A \sim
\mathcal{N}(1, 1)$; $R_B \sim \mathcal{N}(4, 6)$; and $R_C \sim
\textrm{Pareto}(1, 1.5)$. The expected reward from each arm is 1, 4 and 3,
respectively. The optimal solution for a risk-neutral agent is to choose the
second arm, but it is apparent that agents sensitive to negative values should
choose the third arm since the Pareto distribution's support is bounded from
below.

\paragraph{Results.}
We evaluated our proposed methods by training three different Boltzmann
policies on the multi-armed bandit problem. The first
(Fig.~\ref{fig:bandit:mean}) was trained using a standard variant of NAC, the
latter two (Figs.~\ref{fig:bandit:mean_lpm} and~\ref{fig:bandit:mean_lpm2})
used a stateless version of \texttt{NRCPO} with first and second LPMs as risk
measures, respectively; for simplicity, we assume a constant value for the
Lagrange multiplier \(\lambda\).  The results show that after $\sim 5000$
samples, both risk-averse policies have converged on arm $C$. This highlights
the flexibility of our approach, and the improvements in efficiency that can be
gained from incremental algorithms compared to Monte-Carlo estimation. See, for
example, the approach of \citet{tamar2015policy} which used \emph{10000 samples
per gradient estimate}, requiring $\sim 10^5$ sample trajectories before
convergence.

\begin{figure*}
    \centering
    \begin{subfigure}[t]{0.38\linewidth}
        \centering
        \includegraphics[width=\linewidth]{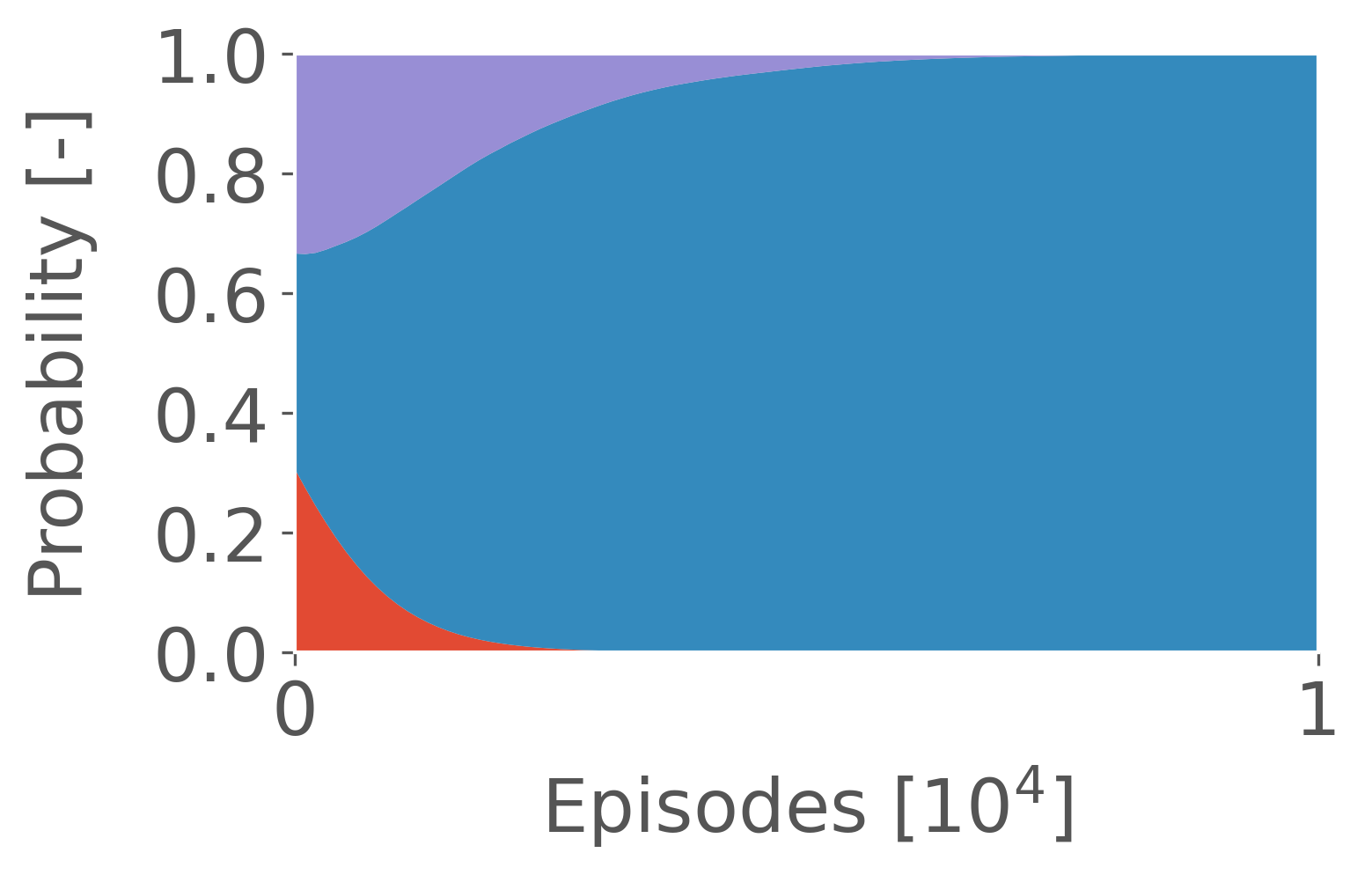}
        \caption{\(\E[R]\)}\label{fig:bandit:mean}
    \end{subfigure}%
    \hfill
    \begin{subfigure}[t]{0.31\linewidth}
        \centering
        \includegraphics[width=\linewidth]{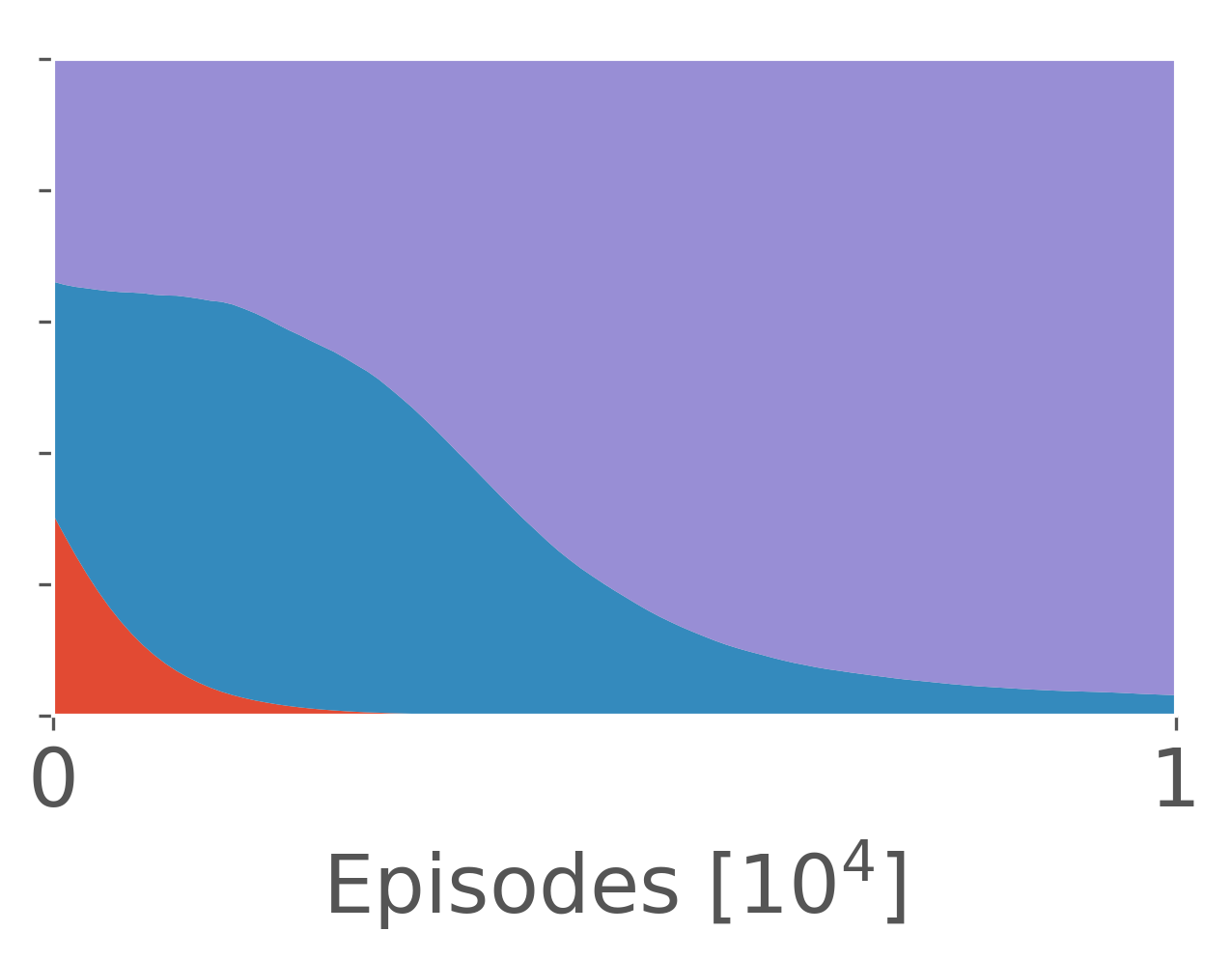}
        \caption{\(\E[R] - 2\M_-[R]\)}\label{fig:bandit:mean_lpm}
    \end{subfigure}%
    \hfill
    \begin{subfigure}[t]{0.31\linewidth}
        \centering
        \includegraphics[width=\linewidth]{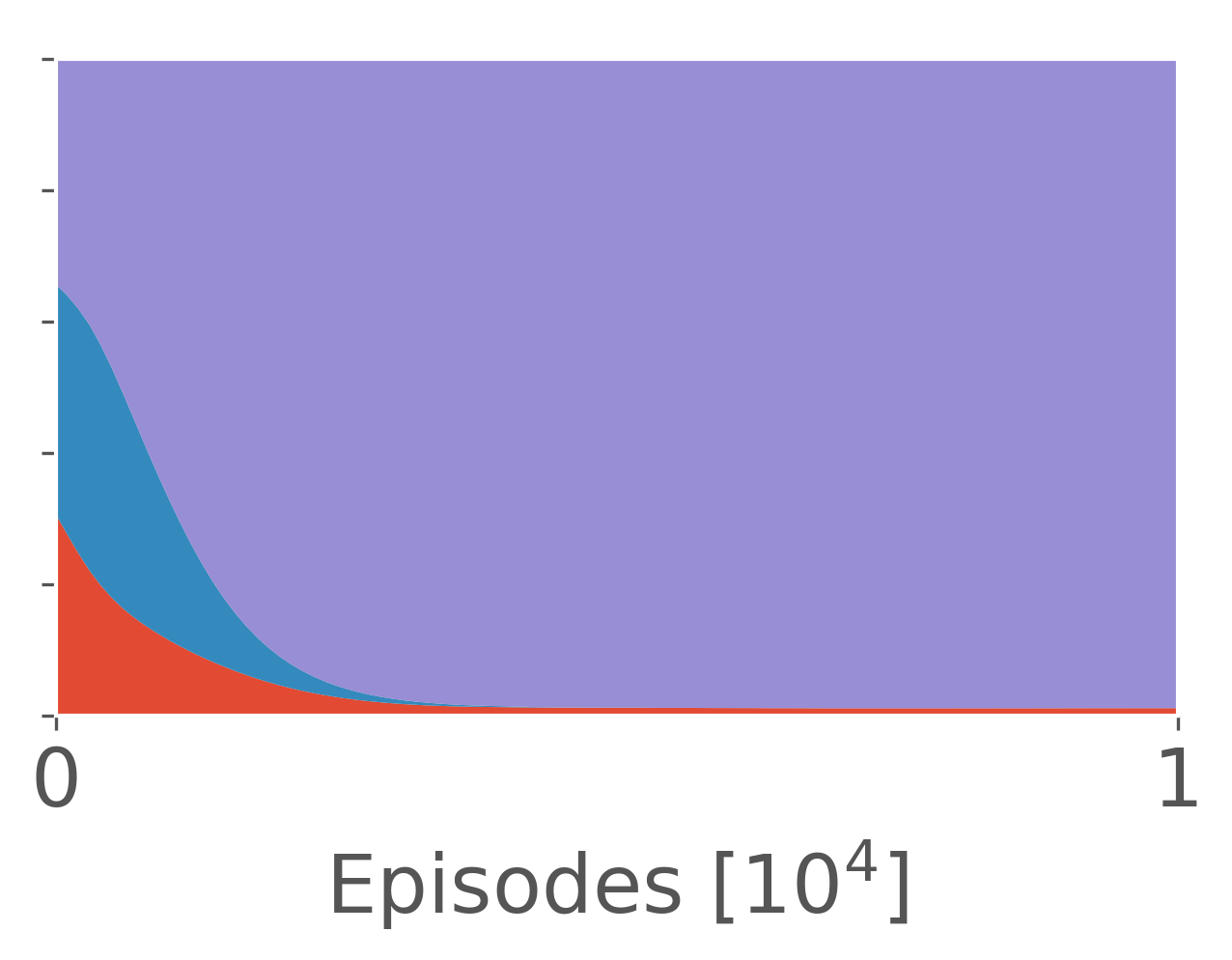}
        \caption{\(\E[R] - \M_-^2[R]\)}\label{fig:bandit:mean_lpm2}
    \end{subfigure}

    \caption{Evolution of Boltzmann policies' selection probabilities for arms
    A (red), B (blue) and C (purple). Each curve represents a normalised
average over 100 independent trials.}\label{fig:bandit}
\end{figure*}

\subsection{Portfolio optimisation}\label{sec:experiments:portfolio}
The next setting is an adaptation of the portfolio optimisation problem first
proposed by~\citet{tamar2012policy}, and featured in~\cite{bisi2019risk}, as a
motivating example of risk-averse RL in finance. Consider two types of asset: a
liquid asset such as cash holdings with interest rate $r_\text{L}$; and an
illiquid asset with time-dependent interest rate $r_\text{N}(t) \in
\{\overline{r}_\text{N}, \underline{r}_\text{N}\}$ that switches between two
values stochastically. Unlike the original formulation, we do not assume that
this happens symmetrically. Instead, we treat \(r_\text{N}(t)\) as a switching
process with two states and transition probabilities $p_\uparrow$ and
$p_\downarrow$. At each timestep the agent chooses an amount (up to $M$) of the
illiquid asset to purchase at a fixed cost per unit \(\alpha\). At maturity
(after $N$ steps) this illiquid asset either defaults (with probability
$p_\text{D}$), or is sold and converted into liquid asset. The state space of
the problem is embedded in $\mathbb{R}^{N+2}$, where the first entry denotes
the allocation in the liquid asset, the next $N$ are the allocations in the
non-liquid assets, and the final value is given by $r_\text{N}(t) - \E_{t' <
t}[r_\text{N}(t')]$. The actions are discrete choices for the purchase order,
and the reward at each timestep is given by the log-return in the liquid asset
between $t$ and $t+1$, as in~\cite{bisi2019risk}.

\paragraph{Results.}
\begin{figure*}
    \centering
    \begin{subfigure}[t]{0.48\linewidth}
        \centering
        \includegraphics[width=\linewidth]{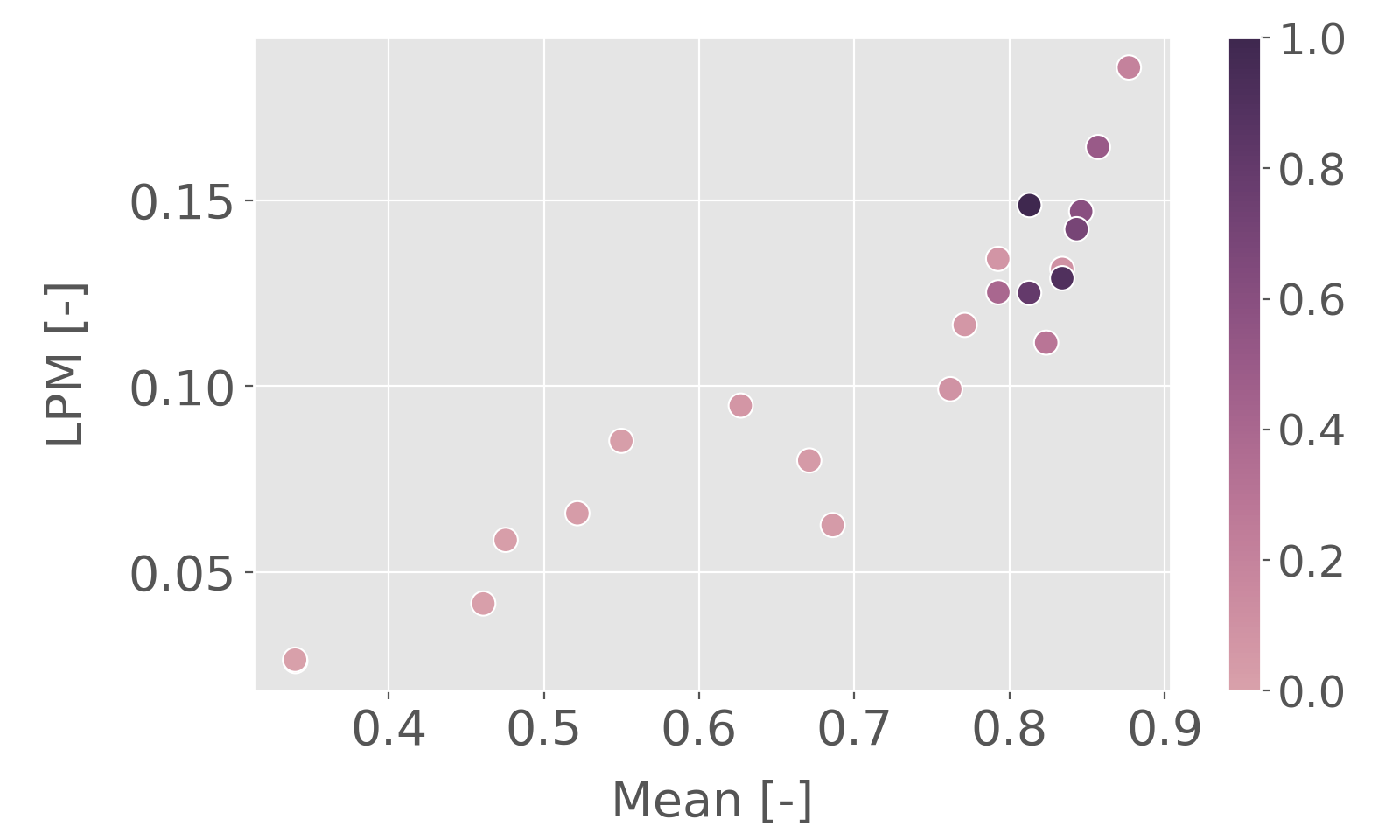}
        \caption{Mean vs.\ the LPM of returns.}
    \end{subfigure}%
    \hspace{1em}
    \begin{subfigure}[t]{0.445\linewidth}
        \centering
        \includegraphics[width=\linewidth]{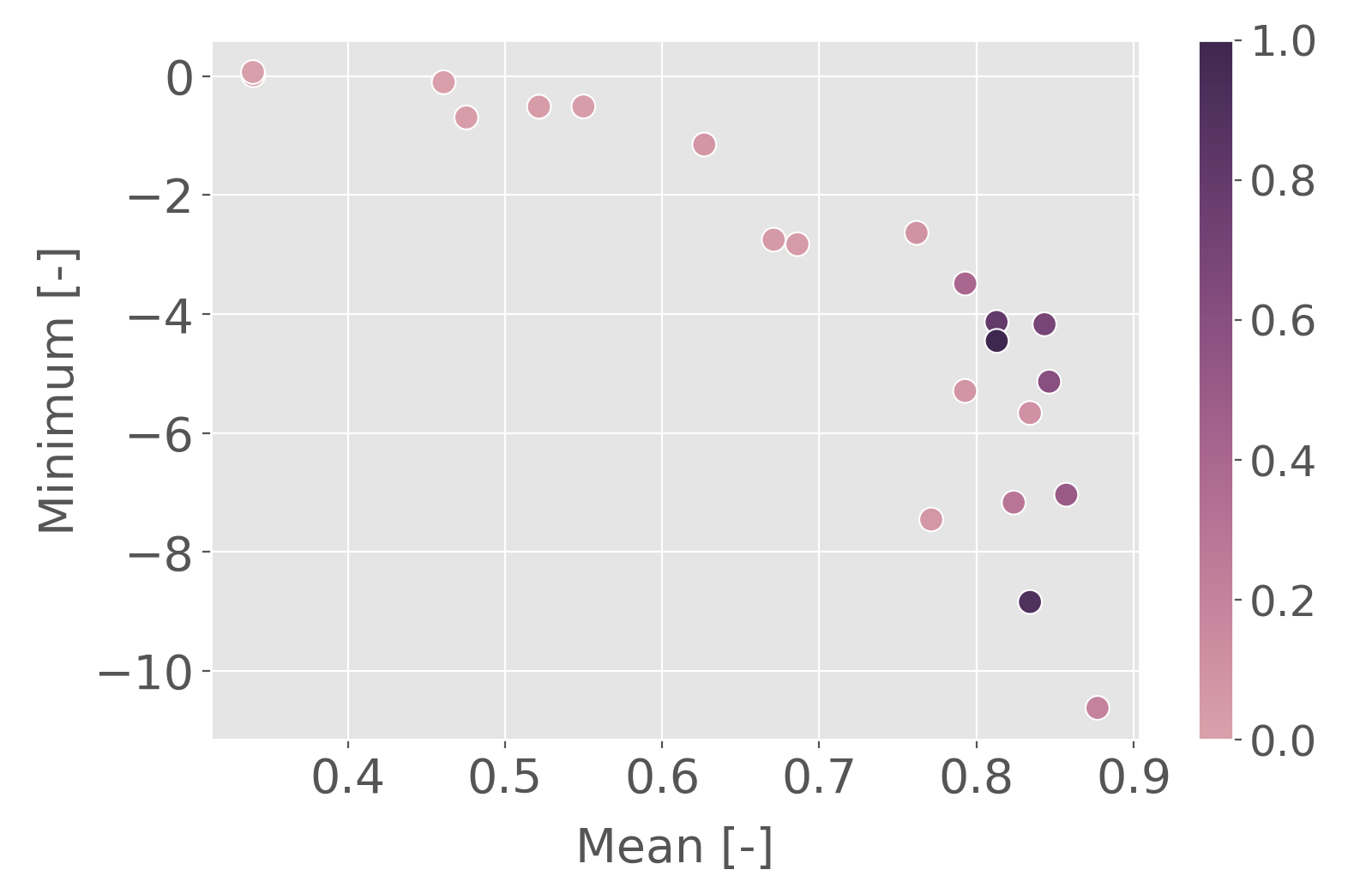}
        \caption{Mean vs.\ the minimum observed return.}
    \end{subfigure}%

    \caption{Performance of portfolio optimisation solutions for varying
    thresholds \(\nu \in [0, 1]\). Each point was generated by evaluating the
    policy over 10\textsuperscript{4} trials following
    training.}\label{fig:portfolio}
\end{figure*}

Figure~\ref{fig:portfolio} shows how the performance of our LPM variant of
\texttt{NRCPO} performs on the portfolio optimisation problem; in this case we
chose to use the centralised LPM as a target, i.e.\ set \(\tau_R(s, a) = r(s,
a)\).  We observe the emergence of a ``frontier'' of solutions which trades-off
maximisation of the expected return with minimisation of the risk penalty. As
the threshold \(\nu\) (see Eq.~\ref{eq:cmdp}) increases (i.e.\ increasing
tolerance to risk), so too do we see a tendency for solutions with a higher
mean, higher LPM and more extreme minima. From this we can conclude that
minimisation of the proxy~\eqref{eq:rho:recursive} does have the desired effect
of reducing the LPM, validating the \emph{practical value of the bound}.

\subsection{Optimal consumption}
The final setting, known as \emph{Merton's optimal consumption
    problem}~\cite{merton1969lifetime} is another example of intertemporal
    portfolio optimisation. This particular setting has been largely unstudied
    in the RL literature\footnote{To the best of our knowledge, the work of
    \citet{weissensteiner2009q} is the only prior example.} in spite of the
    fact that it represents a broad class of real-world problems; e.g.\
    retirement planning. At each timestep the agent must specify two
    quantities:
\begin{enumerate*}
    \item the proportion of it's wealth to invest in a risky asset (whose
        returns we assume to be Normally distributed) and a risk-free/liquid
        asset with deterministic return \(r_\text{L}\), and
    \item an amount of it's wealth to consume and permanently remove from
        the portfolio.
\end{enumerate*}
The problem terminates when all the agent's wealth, \(W_t\), is consumed or the
terminal timestep is reached (200 steps). In the latter case, any remaining
wealth that wasn't consumed is lost. The state space is given by the current
time and the agent's remaining wealth, and the reward is defined to be the
amount consumed. To highlight downside risk, we also extend the traditional
model to include defaults. At each decision point there is a non-zero
probability \(p_\text{D}\) that the risky asset's underlying ``disappears'',
the risky investment is lost, the problem terminates, and the remaining wealth
in the liquid asset is consumed in it's entirety.\footnote{Similar variations
    on the original problem have been considered in the literature by,
e.g.,~\citet{puopolo2017portfolio}.}

\paragraph{Results.}
In this case we defined a custom target function by leveraging prior knowledge
of the problem. Specifically, we set \(\tau_R(s, a) = W_t \cdot \Delta t (T -
t)\), where \(t\) and \(T\) denote the current and terminal times, and \(\Delta
t\) is the time increment. This has the interpretation of the expected reward
generated by an agent that consumes it's wealth at a fixed rate.  Unrolling the
recursive definition of \(\tau(s, a)\), we have an implied target of \(W_t\)
for all states. In other words, we associate a higher penalty with those
policies that underperform said reasonable ``benchmark'' and finish the episode
having consumed less wealth than the initial investment.

\begin{figure*}
    \centering
    \begin{subfigure}[t]{0.47\linewidth}
        \centering
        \includegraphics[width=\linewidth]{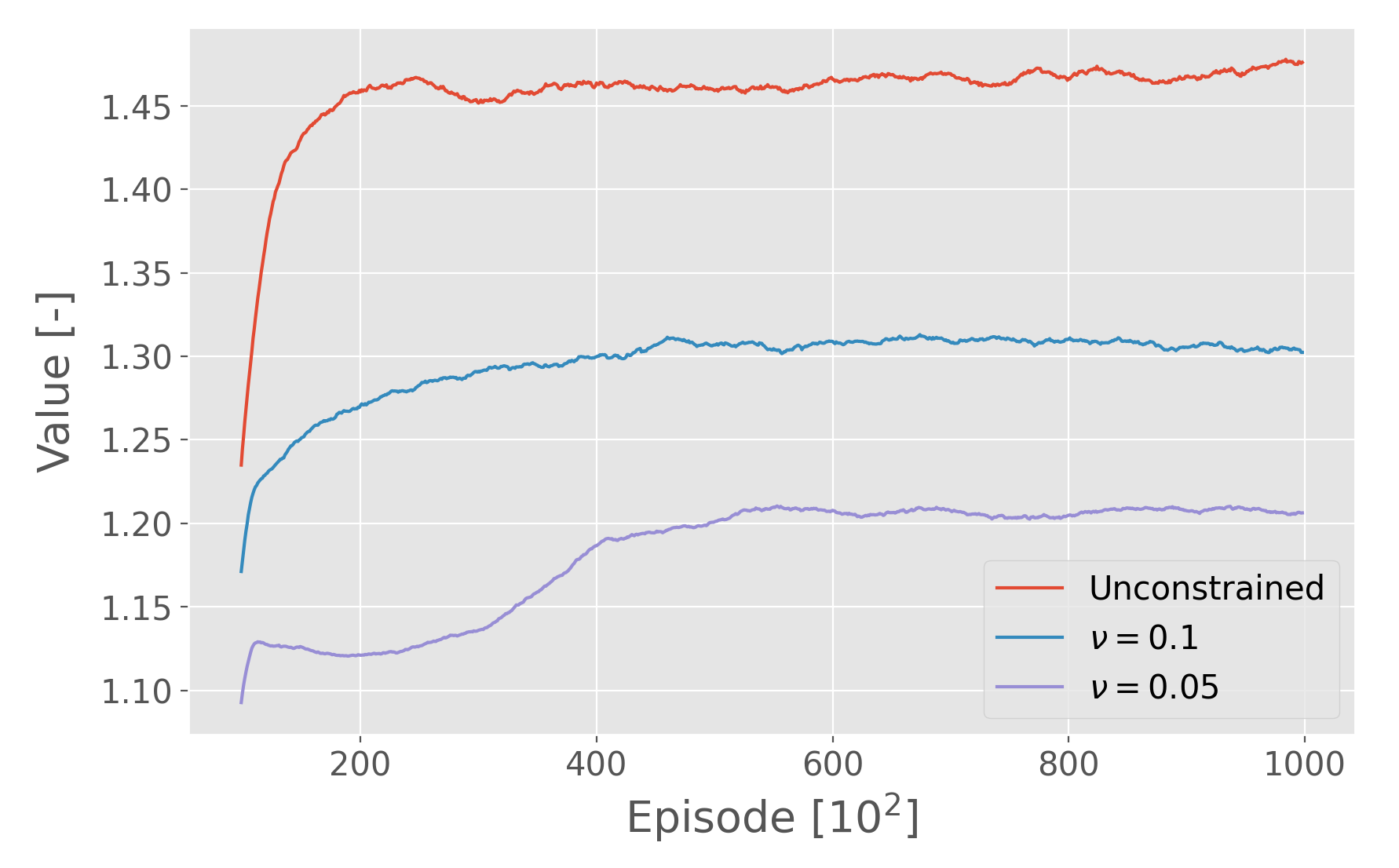}
        \caption{\(J_R(\T)\)}
    \end{subfigure}%
    \hspace{1em}
    \begin{subfigure}[t]{0.47\linewidth}
        \centering
        \includegraphics[width=\linewidth]{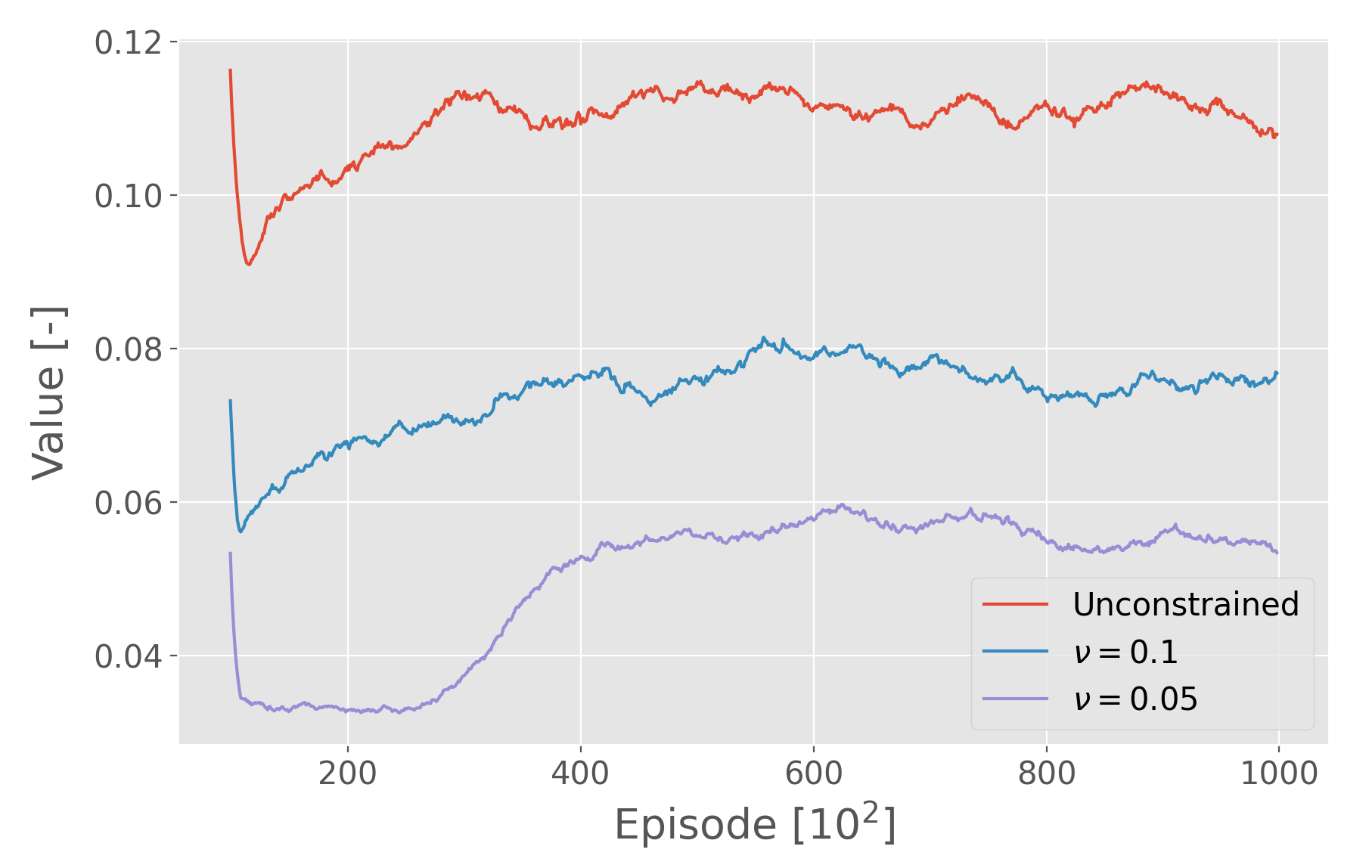}
        \caption{\(J_C(\T)\)}
    \end{subfigure}%

    \caption{Evolution of performance of optimal consumption solutions for
    \(\nu \in \{0.05, 0.1, \infty\}\). Each curve was generated by evaluating
the policy for 100 trials every 100 training episodes, with a simple moving
average of period 100 applied for clarity.}\label{fig:consumption}
\end{figure*}

Figure~\ref{fig:consumption} shows how performance of our algorithm evolved
during training. Each curve was generated by sampling 100 trajectories every
100 training episodes to estimate statistics. As in the previous section, we
observe how decreasing \(\nu\) leads to increasing risk-aversion in the form of
a lower mean and LPM. In all cases the algorithm was able to identify a
\emph{feasible solution} and exhibited highly stable learning. An important
conclusion to take from this is that the flexibility to choose \(\tau_R(s, a)\)
affords us a great deal of control over the behaviour of the policy. In this
case, we \emph{only penalise downside risk associated with losses}.
Furthermore, \texttt{(N)RCPO} removes the need for calibrating the multiplier
\(\lambda\), which can be very hard to tune~\cite{achiam2017constrained}. This
makes our approach \emph{highly practical for many real-world problems}.

\section{Conclusions}
In this paper we have put forward two key ideas. First, that partial moments
offer a tractable alternative to conventional metrics such as variance or
conditional value at risk. We show that our proxy has a simple interpretation
and enjoys favourable reward variance. Second, we demonstrate how an existing
method in constrained policy optimisation can be extended to leverage natural
gradients, an algorithm we call \texttt{NRCPO}. The combination of these two
developments is a methodology for deriving downside risk-averse policies with a
great deal of flexibility and sample efficiency. In future work we hope to
address questions on computational complexity, and how these methods may be
applied to multi-agent systems.

\paragraph{Software and Data.}
All our code will be made freely accessible via GitHub.

\small
\bibliographystyle{unsrtnat}
\bibliography{refs}

\clearpage
\appendix

\section{Variance Analysis}
\begin{customlemma}{2}
	For any constant \(c \in \mathbb{R}\), the variance, $\mathbb{V}[X]$, of a random variable $X$ that is supported on
	$\mathbb{R}$ satisfies $\mathbb{V}[(c - X)_+] \leq \mathbb{V}[X].$
	
\end{customlemma}
\begin{proof}
    Let $Y \defeq c - X$ and $Z \defeq (Y)_+ = (c - X)_+$. Since $c$ is a constant,
     $\mathbb{V}[Y]=\mathbb{V}[X]$. Thus, it suffices to show that
    \(\mathbb{V}[Z] \leq \mathbb{V}[Y]\).

	Denote by  \(p_+ \defeq \mathbb{P}(Y \geq
    0)\) and \(p_- \defeq \mathbb{P}(Y < 0)\),
    the probabilities of \(Y\) being
    non-negative or negative respectively. 
    Let $\mu_+ \defeq \E[Y \mid Y \geq 0]$ and $\mu_- \defeq \E[Y \mid Y < 0]$.
    Then we have:
    \begin{align}
        \mu_Y \defeq \E[Y] &= p_+ \mu_+ + p_- \mu_-, \\
        \mu_Z \defeq \E[Z] &= p_+ \mu_+.
    \end{align}
    Similarly,
    with $m_+^2 \defeq \E[Y^2 \mid Y \geq 0]$ and $m_-^2 \defeq \E[Y^2 \mid Y < 0]$,
    the second moments are:
    \begin{align}
        m_Y^2 \defeq \E[Y^2] &= p_+ m^2_+ + p_- m^2_-, \\
        m_Z^2 \defeq \E[Z^2] &= p_+ m^2_+,
    \end{align}

    Then we can express the variance of $Y$ and $Z$ as
    \begin{align}
		\mathbb{V}[Y] &= p_+ m^2_+  - p_+^2 \mu^2_+ + p_- m^2_- - 2p_+p_-\mu_+\mu_- - p_-^2 \mu^2_-,\\
		\mathbb{V}[Z] &= p_+ m^2_+  - p_+^2 \mu^2_+.
    \end{align}
    Their difference is
    \begin{equation}
        \mathbb{V}[Y] - \mathbb{V}[Z] = p_- m^2_- - 2p_+p_-\mu_+\mu_- - p_-^2
        \mu^2_-.
    \end{equation}
    We split this difference into two non-negative terms as follows.
    \begin{align}
        p_- m^2_- - p_-^2 \mu^2_- &\geq 0, \\
        -2p_+p_-\mu_+\mu_- &\geq 0.
    \end{align}
	The first inequality follows from the fact that $\mathbb{V}[Y \mid Y < 0] =  m^2_- - \mu^2_- \ge 0$, so $m^2_- \geq \mu^2_-$ and 
	$p_- \in [0, 1]$.
	The second inequality follows from the facts that $p_+,p_-,\mu_+\ge 0$ and $\mu_-< 0$.
    Thus, we have shown that $\mathbb{V}[Y] - \mathbb{V}[Z] \ge 0$, which completes the proof.
\end{proof}

\section{Compatible Function Approximation}

We now show that, under certain conditions, we can use \(\hat{q}(s, a) -
\lambda\hat{\varrho}[\tau](s, a)\), instead of \(q(s, a) -
\lambda\varrho[\tau](s, a)\), to get an unbiased estimate of the policy
gradient.

\newpage

\begin{customthm}{1}[Compatible function approximation]\label{thm:cfa}
    If the following three conditions hold:

	\begin{enumerate}[leftmargin=0.5cm]
	\item \(\hat{q}(s, a)\) and \(\hat{\varrho}[\tau](s, a)\) are compatible with
		the policy $\pi_{\T}$, i.e.,
    \begin{equation}\label{eq:cfa:req1}
        \frac{\partial\hat{q}(s, a)}{\partial\boldsymbol{w}_q} =
        \frac{\partial\hat{\varrho}[\tau](s,
        a)}{\partial\boldsymbol{w}_\varrho} = \frac{1}{\pi_{\T}(a \mid s)}
        \frac{\partial\pi_{\T}(a \mid s)}{\partial\T};
    \end{equation}
	\item \(\hat{q}(s, a)\) minimises the error
    \begin{equation}\label{eq:cfa:req2}
        \mathcal{E}_q^2 \defeq \E\left[{\left(\hat{q}_{\boldsymbol{w}_q}(s, a)
        - q(s, a)\right)}^2\right],
    \end{equation}
	\item
	and \(\hat{\varrho}[\tau](s, a)\) minimises the error
    \begin{equation}\label{eq:cfa:req3}
        \mathcal{E}_\varrho^2 \defeq
        \E\left[{\left(\hat{\varrho}_{\boldsymbol{w}_\varrho}[\tau](s, a) -
        \varrho[\tau](s, a)\right)}^2\right],
    \end{equation}
	\end{enumerate}
    then 
    \begin{equation}\label{eq:rcpo:pgrad}
        \frac{\partial \mathcal{L}(\lambda, \T)}{\partial\T} =
        \int_\mathcal{S} d_{\pi_{\T}}(s) \int_{\mathcal{A}(s)}
        \frac{\partial \pi_{\T}(a \mid
            a)}{\partial\T} \left[ \hat{q}(s, a) - \lambda
        \hat{\varrho}[\tau](s, a) \right]\,\mathrm{d}a\,\mathrm{d}s
    \end{equation}
    is an unbiased estimate of the policy gradient.
\end{customthm}
\begin{proof}
    Let \(f(s, a)\) be an arbitrary function of state and action and let there
    exist a corresponding approximator \(\hat{f}(s, a)\) with weights
    \(\boldsymbol{w}_f\). The MSE between the true function and approximation
    is given by
    \begin{equation}
        \mathcal{E}_f^2 = \int_\mathcal{S} d_{\pi_{\T}}(s)
        \int_{\mathcal{A}(s)} \pi_{\T}(a \mid s) \left[\hat{f}(s, a) - f(s, a)
        \right]^2 \text{d}a \text{d}s.
    \end{equation}
    If \(\hat{f}(s, a)\) fulfils requirement~\eqref{eq:cfa:req1}, then the
    derivative of the MSE is given by
    \begin{align}
        \frac{\partial\mathcal{E}_f^2}{\partial\boldsymbol{w}_f}
            &= 2\int_\mathcal{S} d_{\pi_{\T}}(s) \int_{\mathcal{A}(s)}
            \pi_{\T}(a \mid s) \frac{\partial \hat{f}(s,
            a)}{\partial\boldsymbol{w}_f} \left[\hat{f}(s, a) - f(s, a) \right]
            \text{d}a \text{d}s, \\
            &= 2\int_\mathcal{S} d_{\pi_{\T}}(s) \int_{\mathcal{A}(s)}
            \frac{\partial \pi_{\T}(a \mid s)}{\partial\T} \left[ \hat{f}(s, a)
            - f(s, a) \right] \text{d}a \text{d}s.
    \end{align}
    If we then assume that the learning method minimises the MSE defined above
    (i.e.\ requirements~\eqref{eq:cfa:req2} and~\eqref{eq:cfa:req3}), then the
    weights \(\boldsymbol{w}_f\) give the stationary point. Equating the
    expression to zero thus results in the equality
    \begin{equation}
        \int_\mathcal{S} d_{\pi_{\T}}(s) \int_{\mathcal{A}(s)} \frac{\partial
        \pi_{\T}(a \mid a)}{\partial\T} f(s, a) \text{d}a \text{d}s =
        \int_\mathcal{S} d_{\pi_{\T}}(s) \int_{\mathcal{A}(s)} \frac{\partial
        \pi_{\T}(a \mid a)}{\partial\T} \hat{f}(s, a) \text{d}a \text{d}s,
    \end{equation}
    where \(f(s, a)\) may be replaced with either \(q(s, a)\) or
    \(\varrho[\tau](s, a)\) as needed. This means that we can interchange the
    true value functions in the policy gradient with the MSE-minimising
    approximations.  This is the classic result of compatible function
    approximation originally presented by~\citet{sutton2000policy}.

    Now, for an objective of the form \(\mathcal{L}(\lambda, \T) = J_R(\T) -
    \lambda J_C(\T)\) (i.e.\ that used in \texttt{RCPO}), we have that
    \begin{equation}\label{eq:diff_lagrange}
        \frac{\partial \mathcal{L}(\lambda, \T)}{\partial\T} = \frac{\partial
        J_R(\T)}{\partial\T} - \lambda \frac{\partial J_C(\T)}{\partial\T}.
    \end{equation}
    From the policy gradient theorem~\cite{sutton2000policy}, and the
    compatible function approximation result above, we also know that each of
    the differential terms may be expressed by an integral of the form
    \begin{equation}\label{eq:pgt}
        \frac{\partial J_f(\T)}{\partial\T} = \int_\mathcal{S} d_{\pi_{\T}}(s)
        \int_{\mathcal{A}(s)} \frac{\partial \pi_{\T}(a \mid a)}{\partial\T}
        \hat{f}(s, a)\,\mathrm{d}a\,\mathrm{d}s.
    \end{equation}
    Combining~\eqref{eq:diff_lagrange} and~\eqref{eq:pgt}, it follows from the
    linearity of integration (sum rule) that the policy gradient of the
    Lagrangian is given by~\eqref{eq:rcpo:pgrad}.
\end{proof}

\section{Experiments}
A template of our proposed algorithm \texttt{NRCPO} using the LPM as a
constraint is outlined in Algorithm~\ref{alg:nrcpo}. This implementation
includes a tweak to the traditional policy update that was first introduced
by~\citet{thomas2014bias} in which the advantage weights are rescaled using
$L_2$ normalisation. This was found to improve stability during learning.

\begin{algorithm}[H]
    \caption{\texttt{NRCPO-LPM}}
	\label{alg:nrcpo}
    \begin{algorithmic}[1]
        \FOR{\(k \leftarrow 0, 1 \ldots\)}
            \STATE Initialise state \(s_0 \sim d_0(\cdot)\)
            \FOR{\(t \leftarrow 0, 1 \ldots, T-1\)}
                \STATE Sample action \(a_t \sim \pi(\cdot \mid s_t)\)
                \STATE Observe new state \(s_{t+1}\) and reward \(R_{t+1}\)
                \\[0.5em]
            \STATE \textbf{Update critics:} \(\hat{q}(s_t, a_t)\) and
                \(\hat{\varrho}[\tau](s_t, a_t)\)
            \\[0.5em]
            \IF{\(t + 1 \bmod N_{\text{Policy}} = 0\)}
                \STATE \textbf{Update policy:} \(\boldsymbol{\theta} \leftarrow
                \eta \frac{\boldsymbol{w}_q -
                \lambda\boldsymbol{w}_\varrho}{||\boldsymbol{w}_q -
                \lambda\boldsymbol{w}_\varrho||_2}\)
            \ENDIF
            \ENDFOR
        \ENDFOR
    \end{algorithmic}
\end{algorithm}

In the following subsections, we specify for each of the three experimental
domains: the policy $\pi_{\T}$; \(\hat{q}(s, a)\) and~\(\hat{\varrho}[\tau](s,
a)\); and the prediction algorithm used to estimate the weights
\(\boldsymbol{w}_q\) and \(\boldsymbol{w}_\varrho\) in the ``Update critics''
step. We also state the values of any domain-specific parameters used during
experiments.

\subsection{Bandit}
In the bandit problem we considered a Gibbs policy of the form
\begin{equation}
    \pi_{\T}(a) = \frac{e^{\theta_a}}{\sum_b e^{\theta_b}},
\end{equation}
where each action corresponded to a unique choice over the three arms \(a \in
\{A, B, C\}\). The value functions were then represented by linear function
approximators
\begin{align}
    \hat{q}(a) &= \nabla_{\T}\log\pi_{\T}(a)^\top \boldsymbol{w}_q + v_q, \\
    \hat{\varrho}(a) &= \nabla_{\T}\log\pi_{\T}(a)^\top \boldsymbol{w}_\varrho
    + v_\varrho,
\end{align}
which are compatible with the policy by construction. The canonical
\texttt{SARSA} algorithm was used for policy evaluation with learning rate of
0.005. The policy updates were performed every \(N_\text{Policy} = 100\)
episodes with \(\eta = 0.001\).

\subsection{Portfolio Optimisation}
In the portfolio optimisation problem we again considered a Gibbs policy, but
this time of the form
\begin{equation}
    \pi_{\T}(a \mid \boldsymbol{s}) = \frac{e^{\boldsymbol{\phi}(\boldsymbol{s}, a)^\top \T}}{\sum_b
    e^{\boldsymbol{\phi}(\boldsymbol{s}, b)^\top \T}}.
\end{equation}
Here, we chose to use a linear basis over the state space, with independent
sets of activations for each action. That is,
\begin{equation}
    \boldsymbol{\phi}(\boldsymbol{s}, a) =
    \left[\boldsymbol{\phi}(\boldsymbol{s})^\top \circ \boldsymbol{1}^\top_{a\,
    =\, 1}, \ldots, \boldsymbol{\phi}(\boldsymbol{s})^\top \circ
\boldsymbol{1}^\top_{a\, =\, |\mathcal{A}|} \right]^\top
\end{equation}
where \(\circ\) denotes the Hadamard product and the state-dependent basis is
given by
\begin{equation}
    \boldsymbol{\phi}(\boldsymbol{s}) = \left[1, s_1, s_2, \ldots, s_N\right].
\end{equation}
The value functions were then represented by linear function approximators
\begin{align}
    \hat{q}(a) &= \nabla_{\T}\log\pi_{\T}(a \mid \boldsymbol{s})^\top
    \boldsymbol{w}_q + \boldsymbol{\phi}(\boldsymbol{s})^\top \boldsymbol{v}_q,
    \\
    \hat{\varrho}(a) &= \nabla_{\T}\log\pi_{\T}(a \mid \boldsymbol{s})^\top
    \boldsymbol{w}_\varrho + \boldsymbol{\phi}(\boldsymbol{s})^\top
    \boldsymbol{v}_\varrho,
\end{align}
which are compatible with the policy by construction. The canonical
\texttt{SARSA(\(\lambda\))} algorithm was used for policy evaluation with
learning rate of 0.0001, a discount factor of \(\gamma = 0.99\) and
accumulating trace with rate \(\lambda = 1\) (forgive the abuse of notation
wrt. the Lagrange multiplier). The policy updates were then performed every
\(N_\text{Policy} = 200\) time steps with \(\eta = 0.0001\). To improve
stability we pre-trained the value-function and Lagrange multiplier (learning
rate of 0.001) for 1000 episodes against the initial policy.

The portfolio optimisation domain itself was configured as follows: a liquid
interest rate $r_\text{L} = 1.005$, and illiquid interest rates
$\overline{r}_\text{N} = 1.25$ and $\underline{r}_\text{N} = 1.05$; switching
probabilities $p_\uparrow = 0.1$ and $p_\downarrow = 0.6$, and probability of
default $p_\text{D} = 0.1$; max order size of $M = 10$ with asset cost of
$\alpha = 0.2 / M$; maturity time of $N = 4$ steps and episode length of $50$
time steps.

\subsection{Optimal Consumption}
In the optimal consumption problem we had to deal with a 2-dimensional
continuous action space. For this we consider a policy with likelihood that is
the product of two independent probability distributions,
\begin{equation}
    \pi_{\T}(\boldsymbol{a} \mid \boldsymbol{s}) = \pi^{(1)}_{\T_1}(a_1 \mid
    \boldsymbol{s}) \cdot \pi^{(2)}_{\T_2}(a_2 \mid \boldsymbol{s}),
\end{equation}
where
\begin{align}
    \pi^{(1)}_{\T_1}(a_1 \mid \boldsymbol{s}) &=
    \frac{1}{\sqrt{2\pi\hat{\sigma}(\boldsymbol{s})^2}} e^{-\frac{a_1 -
    \hat{\mu}(\boldsymbol{s})}{2\hat{\sigma}(\boldsymbol{s})^2}}, \\
    \pi^{(2)}_{\T_2}(a_2 \mid \boldsymbol{s}) &=
    \frac{1}{B(\hat{\alpha}(\boldsymbol{s}), \hat{\beta}(\boldsymbol{s}))}
    a_2^{\hat{\alpha}(\boldsymbol{s}) - 1} (1 -
    a_2)^{\hat{\beta}(\boldsymbol{s}) - 1}.
\end{align}
In this case, \(\hat{\mu}\) was represented by a linear function approximator
with third-order Fourier basis, and \(\hat{\sigma}, \hat{\alpha}\) and
\(\hat{\beta}\) were given by the same as \(\hat{\mu}\) followed by a softplus
transformation to maintain positive values. Both \(\hat{\alpha}\) and
\(\hat{\beta}\) were also shifted by a value 1 to maintain unimodality.  The
value functions were then represented by linear function approximators
\begin{align}
    \hat{q}(a) &= \nabla_{\T}\log\pi_{\T}(a \mid \boldsymbol{s})^\top
    \boldsymbol{w}_q + \boldsymbol{\phi}(\boldsymbol{s})^\top \boldsymbol{v}_q,
    \\
    \hat{\varrho}(a) &= \nabla_{\T}\log\pi_{\T}(a \mid \boldsymbol{s})^\top
    \boldsymbol{w}_\varrho + \boldsymbol{\phi}(\boldsymbol{s})^\top
    \boldsymbol{v}_\varrho,
\end{align}
which are compatible with the policy by construction, and use the same Fourier
basis as the state-dependent \(\boldsymbol{\phi}(\boldsymbol{s})\). The
canonical \texttt{SARSA(\(\lambda\))} algorithm was used for policy evaluation
with learning rate of 0.00001, a discount factor of \(\gamma = 1\) and
accumulating trace with rate \(\lambda = 0.97\) (forgive the abuse of notation
wrt. the Lagrange multiplier). The policy updates were then performed every
\(N_\text{Policy} = 1000\) time steps with \(\eta = 0.00001\). To improve
stability we pre-trained the value-function and Lagrange multiplier (learning
rate of 0.0025) for 1000 episodes against the initial policy.

The optimal consumption domain itself was configured as follows: a drift of
$r_\text{L} = 0.05$ for the liquid asset; a risky asset whose price follows an
It\^{o} diffusion, $\text{Price}_{t+1} = \text{Price}_t + \mu_\text{R} +
\sigma_\text{R} \text{BM}_t$, where $\mu_\text{R} = 1$, $\sigma_\text{R} =
0.25$ and $\text{BM}_t$ denotes a standard Brownian motion; an initial wealth
of $W_0 = 1$ and time increment of $\Delta t = 0.005$; and a probability of
default at each time step of $p_\text{D} = 0.0015$.

\end{document}